\newtheorem{definition}{Definition}
\newtheorem{lemma}{Lemma}
\newtheorem{theorem}{Theorem}
\newcommand{\R}{\mathbb{R}_{+}}
\newcommand{\norm}[1]{{\left\| #1\right\|}}
\DeclareMathOperator*{\argmin}{argmin}
\begin{document}

\title{Stratified-NMF for Heterogeneous Data}

\author{
    \IEEEauthorblockN{
    James Chapman\IEEEauthorrefmark{1}\textsuperscript{\textsection}, 
    Yotam Yaniv\IEEEauthorrefmark{1}\textsuperscript{\textsection}, Deanna Needell\IEEEauthorrefmark{1}}
    \IEEEauthorblockA{\IEEEauthorrefmark{1}Department of Mathematics, University of California, Los Angeles}
}

\maketitle
\begingroup\renewcommand\thefootnote{\textsection}
\footnotetext{Equal contribution}
\endgroup

\begin{abstract}
Non-negative matrix factorization (NMF) is an important technique for obtaining low dimensional representations of datasets. However, classical NMF does not take into account data that is collected at different times or in different locations, which may exhibit heterogeneity. We resolve this problem by solving a modified NMF objective, Stratified-NMF, that simultaneously learns strata-dependent statistics and a shared topics matrix. We develop multiplicative update rules for this novel objective and prove convergence of the objective. Then, we experiment on synthetic data to demonstrate the efficiency and accuracy of the method. Lastly, we apply our method to three real world datasets and empirically investigate their learned features. 
\footnote{Preprint version. This article will appear in IEEE Asilomar Conference on Signals, Systems, and Computers 2023.}

\end{abstract}

\section{Introduction}
Non-negative matrix factorization (NMF) is a classical unsupervised machine learning method used in dimensionality reduction and topic modeling \cite{kuang2015nonnegative, wang2012nonnegative, shahnaz2006document}. NMF best addresses data which could be grouped into topics. For example, one could break down a collection of books by topics and then further break down the topics by words associated to them \cite{griffiths2007topics, wallach2006topic}. In a more general setting, we refer to books, topics, and words as \textit{samples}, \textit{features}, and \textit{variables}, respectively. This is modeled by 
$$\argmin_{W, H} ||A-WH||_F^2,\ W\geq 0, H\geq 0$$
where $||\cdot||_F$ denotes the Frobenius norm, $A$ is the data matrix of samples by variables, $H$ is the topics matrix which associates topics to variables, and $W$ is a matrix which associates samples with topics. These names come from the fact that each sample is approximated by a linear combination of the rows in $H$ with coefficients from the rows of $W$. The foundational work by Seung and Lee showed that the standard NMF objective can be optimized efficiently with a multiplicative update \cite{seung2001algorithms, lee1999learning}. Additionally, $W$ and $H$ are chosen to be low rank so that the method learns to compress the data into features. The low rank, efficient multiplicative updates allow this method to scale to large datasets. Additionally, NMF is sought after for its sparse and interpretable feature learning \cite{gillis2014and}. 

A potential drawback of the standard NMF model is that it does not directly account for stratified data, e.g. data drawn from multiple sources \cite{albadawy2018deep}. Data collection methods, along with geographic or time differences, can introduce heterogeneity into the dataset. One common solution is to stratify data in order to obtain accurate results for subgroups of the data \cite{hansen1953sample, nazabal2020handling, roetzheim2012analysis}. In this paper, we augment the NMF objective to account for stratified data.

We consider the setting where multiple groups, or strata, share a common topics dictionary with positive, strata dependent shifts. In this work, strata will be given in the problem, but can be obtained via meta-data or other common, external information. For example, articles written in various regions or across time periods differ in dialect or semantic progression. Although NMF may attribute the words ``pop'' and ``soda'' to the topic ``beverage'', it will be unable to explain the regional differences in the use of this word. Using Stratified-NMF, we can learn the strata dependent shifts and obtain a topic dictionary which measures the underlying commonalities between strata.

In this work, we develop a novel extension of NMF called Stratified-NMF, which is able to account for heterogeneous data. We also derive multiplicative updates and show that the Stratified-NMF objective is non-increasing under our multiplicative update rules. Next we analyze our method on four datasets, spanning synthetic data, image data, census data, and natural language text. We experimentally validate the convergence properties of the method and empirically demonstrate the interpretability of Stratified-NMF. Code for our experiments is also publicly available\footnote{Code can be found at https://github.com/chapman20j/Stratified-NMF}.

\section{Proposed Method}
We consider the case where the dataset has $s$ strata, each with a data matrix $A(i)\in \R^{m_i\times n}$ ($\R$ denotes non-negative real numbers). The data is stored row-wise so that $A(i)_{jk}$ denotes the $k$'th attribute of data point $j$ in stratum $i$. Each data point is assumed to be sampled from a distribution 
$$\mathcal{D}_i = \mathcal{D} + z_i$$
where $\mathcal{D}$ is a 
shared, non-negative distribution and $z_i\in \R^n$ is a strata dependent shift. To account for this, we want to perform NMF on the matrix
\begin{equation}
    A(i) - \begin{bmatrix}
        - & z_i^T & - \\
        - & z_i^T & - \\
         & \vdots & \\
        - & z_i^T & - \\
    \end{bmatrix}
    = A(i) - \mathbf{1} z_i^T
\end{equation}
but over all the strata, where $\mathbf{1}_i \in \R^{m_i}$ is the all ones vector. This leads to the following Stratified-NMF objective. 
\begin{align}
  &  \argmin_{v(i), W(i), H} \sum_{i=1}^s ||A(i) - \mathbf{1} v(i)^T - W(i) H||_F^2, \label{eq:loss}\\
  &  v(i)\geq 0,\ W(i)\geq 0,\ H \geq 0,\ \forall 1\leq i\leq s \label{eq:constraints}
\end{align}
Note that the low rank update to each stratum keeps the number of additional parameters in the NMF objective relatively low. This property is appealing since it prevents overfitting to each stratum.

This formulation allows the model to naturally learn a global representation, $H$, of the data and local representations, $v(i)$, of each stratum. This relieves the model of inductive biases which may be introduced by other stratification techniques. One might consider more simple approaches like centering the means of the different strata or unit normalization of individual data points. The mean centering approach fails because the objective is non-linear and won't behave well under arbitrary additive shifts. 
Similarly, the unit normalization approach may flatten out data at different scales.

\begin{algorithm}[ht]
\caption{Stratified-NMF Multiplicative Update Algorithm} \label{alg:mu-SNMF}
\begin{algorithmic}[1]
\State \textbf{Input:} Data $A(i)\in \R^{m_i\times n}$ for $1\leq i\leq s$, number of iterations $N$, number of $v$ updates per iteration $M$.
\State Construct initial $v(i)\in \R^n, W(i)\in \R^{m_i\times r}, H\in \R^{r\times n}$ for $1\leq i\leq s$
\For{$k$ from 0 to $N-1$}
\For{$j$ from 0 to $M-1$}
\State $v(i) \gets v(i) \frac{A(i)^T\mathbf{1}}{ v(i)m_i + H^T W(i)^T \mathbf{1}} \; \forall i$ \label{eq:updateV}
\EndFor
\State  $W(i) \gets W(i) \frac{A(i) H^T }{(W(i) H + \mathbf{1} v(i)^T)H^T} \; \forall i$ \label{eq:updateW}
\State $H \;\;\;\;\, \gets H \frac{\sum_i W(i)^TA(i)}{\sum_i W(i)^T (W(i) H+\mathbf{1}v(i)^T)}$\label{eq:updateH}
\EndFor
\end{algorithmic}
\end{algorithm}

\section{Theoretical Results}
Due to the similar structure of the Stratified-NMF objective and the standard NMF objective, we are able to develop multiplicative update rules described in Algorithm~\ref{alg:mu-SNMF}. Note that these multiplicative update rules satisfy the constraints in Equation~\ref{eq:constraints} throughout the optimization procedure. Additionally, we state and prove Theorem~\ref{thm:convergence} showing that the multiplicative updates satisfy similar convergence properties to those of the standard NMF objective \cite{seung2001algorithms, dempster1977maximum}.

\begin{definition}(From \cite{seung2001algorithms})
$G(h,h^\prime)$ is an auxiliary function for $F(h)$ if the conditions $G(h,h^\prime) \geq F(h), \; \forall h^\prime$ and $G(h,h) = F(h)$.    
\end{definition}

\begin{lemma}(From \cite{seung2001algorithms})
\label{lem:aux_non_increasing}
If $G$ is an auxiliary function, then $F$ is nonincreasing under the update
$$h^{t+1}=\argmin_h G(h, h^t).$$
\end{lemma}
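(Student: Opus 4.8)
The plan is to establish the descent property through the standard majorization--minimization chain of inequalities, relying only on the two defining properties of an auxiliary function together with the optimality of the update $h^{t+1}$. Concretely, I would prove
\[
F(h^{t+1}) \le G(h^{t+1},h^t) \le G(h^t,h^t) = F(h^t),
\]
and then read off monotonicity from the outer terms.

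First I would invoke the upper-bound property $G(h,h^\prime)\ge F(h)$ at $h=h^{t+1}$, $h^\prime=h^t$, which yields the leftmost inequality $F(h^{t+1})\le G(h^{t+1},h^t)$; this trades a statement about $F$ at the new iterate for one about $G$, where we have leverage. Next, since $h^{t+1}=\argmin_h G(h,h^t)$ and $h^t$ is itself a feasible competitor in that same minimization, the minimizer cannot be beaten by $h^t$, giving $G(h^{t+1},h^t)\le G(h^t,h^t)$. Finally the diagonal equality $G(h,h)=F(h)$ at $h=h^t$ collapses the right-hand side to $F(h^t)$. Composing the three relations gives $F(h^{t+1})\le F(h^t)$, which is exactly the claimed nonincreasing behavior.

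I do not expect a substantive obstacle here: the entire argument is this three-step chain and requires no computation. The only points deserving care are purely formal --- confirming that the $\argmin$ defining $h^{t+1}$ is actually attained, so that the middle inequality is legitimate, and keeping each inequality oriented so that the terms telescope in the correct direction. The genuine difficulty in the convergence analysis lies elsewhere, in exhibiting an explicit auxiliary function $G$ for the Stratified-NMF loss of Equation~\ref{eq:loss} to which this generic lemma is later applied; the lemma itself merely supplies the descent mechanism once such a $G$ is in hand.
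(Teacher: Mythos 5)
Your proof is correct and is exactly the standard one-line majorization--minimization chain $F(h^{t+1}) \le G(h^{t+1},h^t) \le G(h^t,h^t) = F(h^t)$; the paper does not reprove this lemma but cites it from Lee and Seung, whose proof is precisely this argument. No differences to report.
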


\begin{theorem}(From \cite{seung2001algorithms}) \label{thm:nmf_non_increasing}The Euclidean distance $||A - W H||_F$ is nonincreasing under the update rules
\begin{equation}
    \label{eqn:standard_nmf_mu}
    H_{ij} \gets H_{ij} \frac{(W^T A)_{ij}}{(W^T WH)_{ij}} \quad
    W_{ij} \gets W_{ij} \frac{(A H^T)_{ij}}{(WHH^T)_{ij}}
\end{equation}
\end{theorem}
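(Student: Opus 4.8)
The plan is to invoke the auxiliary-function machinery of Lemma~\ref{lem:aux_non_increasing}, following the strategy of Seung and Lee. Since the square is monotone, it suffices to show the squared distance $\norm{A - WH}_F^2$ is nonincreasing. First I would observe that this objective decouples across the columns of $H$ once $W$ is fixed (and, symmetrically, across the rows of $W$ once $H$ is fixed), so it suffices to treat a single column. Writing $a$ for a column of $A$ and $h$ for the corresponding column of $H$, set $F(h) = \tfrac{1}{2}\norm{a - Wh}^2$, a convex quadratic with gradient $\nabla F(h) = W^T W h - W^T a$ and constant Hessian $W^T W$.

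The key step is to exhibit an auxiliary function. I would take
$$G(h, h^t) = F(h^t) + (h - h^t)^T \nabla F(h^t) + \tfrac{1}{2}(h - h^t)^T K(h^t)(h - h^t),$$
where $K(h^t)$ is the diagonal matrix with entries $K_{aa}(h^t) = (W^T W h^t)_a / h^t_a$. The equality $G(h,h) = F(h)$ is immediate, so the content is the inequality $G(h, h^t) \geq F(h)$. Since $F$ coincides with its own second-order Taylor expansion about $h^t$ (its Hessian is exactly $W^T W$), this inequality reduces to showing that $K(h^t) - W^T W$ is positive semidefinite.

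Establishing this positive semidefiniteness is the main obstacle. I would prove it by the standard rescaling trick: for an arbitrary vector $\nu$, note that $(\mathrm{diag}(h^t)\nu)^T(K(h^t) - W^T W)(\mathrm{diag}(h^t)\nu) = \sum_{a,b} \nu_a\, h^t_a\, (K(h^t) - W^T W)_{ab}\, h^t_b\, \nu_b$, and since $\mathrm{diag}(h^t)$ is invertible this equals zero for all $\nu$ iff $K(h^t) - W^T W \succeq 0$. After substituting the definition of $K$ and symmetrizing in the indices $a,b$, the sum collapses to $\tfrac{1}{2}\sum_{a,b}(W^T W)_{ab}\, h^t_a h^t_b\, (\nu_a - \nu_b)^2$, which is nonnegative precisely because $W \geq 0$ forces $(W^T W)_{ab} \geq 0$. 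This is the step where the nonnegativity constraint is essential, and it is the calculation I expect to require the most care.

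Finally, because $G$ is a quadratic in $h$ with positive-definite Hessian $K(h^t)$, its unique minimizer is $h^{t+1} = h^t - K(h^t)^{-1}\nabla F(h^t)$; plugging in the diagonal form of $K$ and the gradient yields, coordinatewise, $h^{t+1}_a = h^t_a (W^T a)_a / (W^T W h^t)_a$, which is exactly the stated $H$ update. Lemma~\ref{lem:aux_non_increasing} then gives that $F$, and hence $\norm{A - WH}_F^2$ after reassembling the columns, is nonincreasing. The $W$ update follows by the identical argument applied to $\norm{A^T - H^T W^T}_F^2$, i.e. by transposition.
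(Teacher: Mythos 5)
Your proposal is correct and takes essentially the same route as the proof the paper relies on: this theorem is stated without proof, citing \cite{seung2001algorithms}, and you have faithfully reconstructed that source's auxiliary-function argument (column-wise decoupling, the diagonal matrix $K$ with $K_{aa}=(W^TWh^t)_a/h^t_a$, positive semidefiniteness of $K-W^TW$ via the rescaling-and-symmetrization collapse to $\tfrac{1}{2}\sum_{a,b}(W^TW)_{ab}\,h^t_a h^t_b(\nu_a-\nu_b)^2\geq 0$, minimization of $G$ recovering the multiplicative step, and transposition for the $W$ update), which is also precisely the machinery the paper itself redeploys in proving Theorem~\ref{thm:convergence}. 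One minor wording slip to fix: in the rescaling step the quadratic form $(\mathrm{diag}(h^t)\nu)^T\bigl(K-W^TW\bigr)(\mathrm{diag}(h^t)\nu)$ should be \emph{nonnegative} for all $\nu$ (not ``equal to zero''), which is exactly what your symmetrized sum establishes.
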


\begin{theorem}[Convergence of Multiplicative Update Rules]\label{thm:convergence} The Stratified-NMF objective defined in Equation~\ref{eq:loss} is non-increasing under the update rules defined in Lines~\ref{eq:updateV}, ~\ref{eq:updateW}, and ~\ref{eq:updateH}  of Algorithm~\ref{alg:mu-SNMF}.  
\end{theorem}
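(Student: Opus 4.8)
The plan is to prove the statement one block at a time, using the auxiliary-function machinery of Lemma~\ref{lem:aux_non_increasing}: for each of the three blocks $v(i)$, $W(i)$, and $H$, I will exhibit an auxiliary function $G$ for the restriction $F$ of the objective to that block (holding the other two fixed) and show that the stated multiplicative update is exactly $\argmin_h G(h,h^t)$ about the current iterate. Since each update then leaves the objective non-increasing, the whole sweep in Algorithm~\ref{alg:mu-SNMF} (including the inner loop of $M$ repeated $v$ updates) is non-increasing by composition. The first observation that makes this tractable is that, with the other blocks fixed, the objective decouples: it separates over strata $i$ for the $v$ and $W$ updates, over columns for $H$ (whose column subproblems couple the strata only through the sum $\sum_i$), and in fact into independent scalar quadratics for $v$.

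For the $W(i)$ and $H$ updates I will follow the construction behind Theorem~\ref{thm:nmf_non_increasing}. Writing $C(i) = A(i) - \mathbf{1}v(i)^T$, the $H$-restricted objective is $\sum_i \| C(i) - W(i)H \|_F^2$, whose gradient in column $k$ is $2\big[(\sum_i W(i)^T(W(i)H + \mathbf{1}v(i)^T))_{:,k} - (\sum_i W(i)^T A(i))_{:,k}\big]$ and whose Hessian is $2\sum_i W(i)^TW(i)$. I will take the diagonal majorizer $K$ whose entries are the denominator of the stated $H$ update divided by the current iterate $H^t$, so that the minimizer $H^{t+1} = H^t - \tfrac12 K^{-1}\nabla F(H^t)$ collapses to the multiplicative rule in Line~\ref{eq:updateH}. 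The domination $K \succeq \sum_i W(i)^TW(i)$ needed for $G \geq F$ follows from the positive-semidefiniteness lemma behind Theorem~\ref{thm:nmf_non_increasing}, together with the fact that the shift contributes the extra nonnegative term $\sum_i W(i)^T\mathbf{1}v(i)^T$ to the denominator, which only enlarges $K$ entrywise. The $W(i)$ update is identical after transposing, with Hessian $2HH^T$ per row and the shift entering as $\mathbf{1}v(i)^T H^T$ in the denominator.

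The $v(i)$ update is the genuinely new piece, but it is also the simplest: with $W(i)$ and $H$ fixed and $E(i) = A(i) - W(i)H$, the restricted objective is $\| E(i) - \mathbf{1}v(i)^T \|_F^2 = \sum_k \sum_j (E(i)_{jk} - v(i)_k)^2$, a sum of independent scalar quadratics in the coordinates $v(i)_k$ with curvature $2m_i$ and gradient $2[m_i v(i)_k + (\mathbf{1}^T W(i)H)_k - (\mathbf{1}^T A(i))_k]$. Taking the scalar majorizer $K_k = [m_i v(i)_k^t + (\mathbf{1}^T W(i)H)_k]/v(i)_k^t \geq m_i$ and minimizing the resulting $G$ reproduces Line~\ref{eq:updateV}, and the auxiliary-function inequality is immediate since $K_k \geq m_i$ is exactly the one-dimensional analogue of the PSD domination.

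The main obstacle is not any single inequality but the reason the updates are written the way they are: a direct application of the rule in Theorem~\ref{thm:nmf_non_increasing} would place the shift term $-2\sum_i W(i)^T\mathbf{1}v(i)^T$ (resp.\ $-2\,\mathbf{1}^T W(i)H$) with a minus sign in the numerator, which could drive iterates negative and break feasibility. The crux of the argument is therefore to move this term into the denominator and to check that doing so (i) still yields a valid auxiliary function, because the added quantity is nonnegative and hence only strengthens $K \succeq \nabla^2 F /2$, and (ii) preserves the stationary points, since numerator equals denominator exactly when the gradient vanishes. Once these two facts are verified for each block, Lemma~\ref{lem:aux_non_increasing} finishes the proof.
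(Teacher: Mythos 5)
Your proposal is correct, but for two of the three blocks it takes a genuinely different route from the paper. For $W(i)$ you do exactly what the paper does: its proof handles $W$ via a Lee--Seung-style auxiliary function with diagonal majorizer $K = K^1 + K^2$, where $K^2_{aa} = (v^T H^T)_a / w_a$ is the nonnegative diagonal contribution of the shift and $K^1 - HH^T \succeq 0$ is Lee--Seung's Lemma~2 --- precisely your observation that moving the shift term into the denominator only enlarges the majorizer while preserving the fixed points. For $v(i)$ and $H$, however, the paper builds no auxiliary functions at all: it packs the objective into a single standard NMF problem $\norm{\hat A - \hat W \hat H}_F^2$, where $\hat W$ augments the stacked $W(i)$ with indicator columns $\mathbf{1}_i$ and $\hat H$ stacks the rows $v(i)^T$ on top of $H$, and then verifies that the standard update $\hat H \gets \hat H\, (\hat W^T \hat A)/(\hat W^T \hat W \hat H)$, restricted to the $v$-rows and to the $H$-rows, reproduces Lines~\ref{eq:updateV} and~\ref{eq:updateH}, so monotonicity is inherited directly from Theorem~\ref{thm:nmf_non_increasing}. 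Your direct majorization--minimization treatment of these blocks is equally valid: the scalar analysis of $v(i)$ (curvature $2m_i$, majorizer $K_k \geq m_i$) is airtight, and the $H$ argument goes through once you note that $\sum_i W(i)^T W(i) = \tilde W^T \tilde W$ for the vertically stacked $\tilde W$, so that Lee--Seung's positive-semidefiniteness lemma applies to the coupled Gram matrix --- a small point you should state explicitly rather than leave implicit. Comparing the two: the paper's block reformulation derives the $v$ and $H$ update formulas and their monotonicity in one stroke and explains where they come from, but it cannot cover the $W$ step (the $\mathbf{1}_i$ columns of $\hat W$ must be held fixed, while the standard $W$-update would modify them), so the paper needs a hand-built auxiliary function there anyway; your uniform per-block approach avoids the block construction entirely and cleanly accommodates the fact that $v$, $W$, and $H$ are updated at separate times (with $v$ updated $M$ times), since a diagonal majorizer separates over coordinates --- a detail the block-reduction argument quietly relies on when it applies the full-$\hat H$ theorem to partial updates --- at the modest cost of re-verifying the domination $K \succeq \sum_i W(i)^T W(i)$ for the $H$ block.
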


\begin{proof}
We reformulate the Stratified-NMF objective into a large block formulation resulting in a standard NMF objective with $s$ additional columns in $W$ and $s$ additional rows in $H$:
\begin{align}
\norm{\hat A- \hat{W}\hat{H}}_{F}^2
\end{align} with
\begin{equation}
    \hat{W} =\begin{bmatrix} 
    \mathbf{1}_1 & \mathbf{0}_1 & \dots & \mathbf{0}_1 & W(1) \\
    \mathbf{0}_2 & \mathbf{1}_2 & \dots & \mathbf{0}_2 & W(2) \\
    \vdots & \vdots & \dots & \vdots & \vdots \\
    \mathbf{0}_s & \mathbf{0}_s & \dots & \mathbf{1}_s & W(s) \\
    \end{bmatrix} \in \R^{\left(\sum_{i=1}^s m_i \right)\times (r + s)},
\end{equation} where $\mathbf{0}_i \in \R^{m_i}$ denotes the zero vector and $\mathbf{1}_i \in \R^{m_i}$ denotes the all-ones vector, and  

\begin{equation}
    \hat A = 
    \begin{bmatrix}
        A(1) \\
        A(2) \\
        \vdots \\
        A(s)
    \end{bmatrix} \in \R^{(\sum_{i=1}^s m_i)\times n}
    ,\ 
    \hat{H} = \begin{bmatrix}
        v(1)^T \\
        v(2)^T \\
        \vdots \\
        v(s)^T \\
        H
    \end{bmatrix} \in \R^{(r+s) \times n}.
\end{equation}
Note that the block matrix $\hat H$ contains parameters, all of which are optimized. Lee and Seung show the non-increasing property of the NMF objective for the $W$ and $H$ multiplicative updates separately \cite{seung2001algorithms}. Therefore to show convergence on the $v$ and $H$, it is enough to show that we obtain the same update formulas from performing NMF on the block formulation. In particular, the standard NMF update says
$$\hat H \gets \hat H \frac{\hat W^T \hat A}{\hat W ^ T \hat W \hat H}.$$
Applying this update rule gives
\begin{align*}
    v(i)^T &= \hat H_i\\
    &\gets v(i)^T \frac{(0, ..., 0, \mathbf{1}^T(i), 0, ..., 0) \hat A}{(0, ..., 0, \mathbf{1}^T(i), 0, ..., 0) \hat W \hat H} \\
    &= v(i)^T \frac{\mathbf{1}^T(i)A(i)}{(0, \cdots, 0, \mathbf{1}^T(i)\mathbf{1}(i), 0, \cdots, 0, \mathbf{1}^T(i) W(i)) \hat H}\\
    &= v(i)^T \frac{\mathbf{1}^T(i)A(i)}{m_i v(i)^T + \mathbf{1}^T(i) W(i)H}.
\end{align*}
Taking transposes on both sides recovers the Stratified-NMF update rule. Let $[H]$ denote the rows corresponding to $H$ in $\hat H$. Then
\begin{align*}
    H &\gets H \frac{\hat W^T_{[H]} \hat A}{\hat W^T_{[H]}\hat W \hat H}\\
    &= H \frac{(W(1)^T,\cdots,W(s)^T) \hat A}{(W(1)^T,\cdots,W(s)^T) \hat W \hat H}\\
    &= H \frac{\sum_i W(i)^T A(i)}{(W(1)^T\mathbf{1}(1),\cdots, W(s)^T \mathbf{1}(s), \sum_i W(i)^T W(i)) \hat H}\\
    &= H \frac{\sum_i W(i)^T A(i)}{\sum_i W(i)^T\mathbf{1}(i)v(i)^T + W(i)^T W(i) H }.
\end{align*}
By Theorem~\ref{thm:nmf_non_increasing}, the Stratified-NMF objective is non-increasing under the multiplicative updates defined in Equation~\ref{eq:updateV} and Equation~\ref{eq:updateH}. 

Now we look at the multiplicative update for $W$. Following the proof technique in \cite{seung2001algorithms}, consider the function
$$F_{\sigma, r}(w) = \frac{1}{2} \sum_c \left(a_c - v_c - \sum_j w_j(\sigma)H_{j, c}\right)^2,$$
where $w_j = W_{rj}(\sigma)$, $a_c = A(\sigma)_{rc}$, $v_c = v(c)$. We define $K$ to be the diagonal matrix with entries
$$K_{aa} = \frac{(wHH^T + v^T H^T)_a}{w_a} = \frac{(wHH^T)_a}{w_a} + \frac{(v^T H^T)_a}{w_a} = K_{aa}^1 + K_{aa}^2.$$
We also define
$$G_{\sigma, r}(w, w^t) = F_{\sigma, r}(w^t) + (w-w^t)\nabla F_{\sigma, r}(w^t) + \frac{1}{2} (w-w^t) K(w^t) (w-w^t)$$
and expand $F_{\sigma, r}$ to obtain
$$F_{\sigma, r}(w) = F_{\sigma, r}(w^t) + (w-w^t)\nabla F_{\sigma, r}(w^t) + \frac{1}{2} (w-w^t) HH^T (w-w^t).$$
To show that $G_{\sigma, r}$ is an auxiliary function of $F_{\sigma, r}$, it suffices to show that 
$$K - HH^T$$
is positive semi-definite. We see that 
$$K^1 +K^2 - HH^T \succeq K^1 - HH^T \succeq 0,$$
where the first inequality comes from the fact that $K^2$ is non-negative and diagonal. The second inequality was shown in the proof of Lemma~2 from \cite{seung2001algorithms} as $K^1$ is the same as $K$ in their proof. Summing over the strata and rows, we obtain 
$$G_{\sigma, r} \geq F_{\sigma, r},\ \forall \sigma, r \implies G = \sum_{\sigma, r} G_{s, r} \geq \sum_{\sigma, r} F_{\sigma, r} = F,$$
where $F$ is the Stratified-NMF objective and minimizing $G$ recovers the multiplicative update for $W$ defined in Equation~\ref{eq:updateW}. In other words, $G$ is an auxiliary function for $F$. By Lemma~\ref{lem:aux_non_increasing}, the Stratified-NMF objective is non-increasing under the multiplicative update rule defined in Equation~\ref{eq:updateW}.
\end{proof}

\section{Experimental Results}
Next we experimentally analyze the performance of Stratified-NMF on synthetic, housing, image and text datasets. We show the merits of Stratified-NMF as a tool for interpretable unsupervised learning in these unsupervised learning applications. In all experiments $H, \; W(i)\; \forall i$ are initialized as random uniform in $[0,1/\sqrt{r}\,]$ entries, independently and identically distributed (iid) for all parameters. The $v(i) \; \forall i$ are initialized as random uniform $[0,1]$ iid entries for all parameters. The number of updates for $v$ at each iteration (denoted $M$ in Algorithm~\ref{alg:mu-SNMF}) is set to 2 for all experiments. We add $10^{-9}$ to all the entries in the denominator of each update to avoid division by zero. In this work, \textit{loss} denotes the square root of the Stratified-NMF objective and \textit{normalized loss} refers to the following quantity:
$$\sqrt{\frac{\sum_{i=1}^s ||A(i) - \mathbf{1} v(i)^T - W(i) H||_F^2}{\sum_{i=1}^s ||A(i)||_F^2}}.$$

\subsection{Synthetic}
We test the Stratified-NMF objective on a synthetic dataset containing $4$ strata, $A(1),\,A(2),\,A(3),\,A(4)$. Each matrix, $A(i) \in \R^{100 \times 100}$ is constructed by multiplying a $100 \times 5$ matrix with a $5\times 100$ matrix each of which have entries that are drawn uniformly from $[0,1]$. We then add a synthetic strata feature of $\mathbf{1}v_{\text{true}}^T(i)$ where $v_{\text{true}}(i)$ has entries drawn uniformly from $[i-1,i]$. We run stratified-NMF for $10000$ iterations and analyze the results. 

Figure~\ref{fig:loss} shows the log-log plot of the normalized stratified NMF objective versus iterations. Observe that the multiplicative update detailed in Algorithm~\ref{alg:mu-SNMF} converges on the synthetic dataset in under $2000$ iterations and obtains a final normalized loss of $9.7e-4$. 
Additionally, in Figure~\ref{fig:strata_mean}, we measure the success of recovering the learned strata features by plotting the mean value of each $v(i)$ per iteration. Since $v_{\text{true}}^T(i)$ has entries drawn uniformly from $[i-1,i]$ we expect the $v(i)$'s to converge to $i-0.5$, which corresponds to $0.5$ for $v(1)$, $1.5$ for $v(2)$, $2.5$ for $v(3)$, and $3.5$ for $v(4)$. Figure~\ref{fig:strata_mean} shows that the means converge to $0.51,\, 1.47,\, 2.53,\, 3.57$, respectively. These means are very close to the expected means, which highlights the utility of the $v(i)$'s in recovering information about the individual strata. 

\begin{figure}[ht]
    \centering
    \includegraphics[scale=.58]{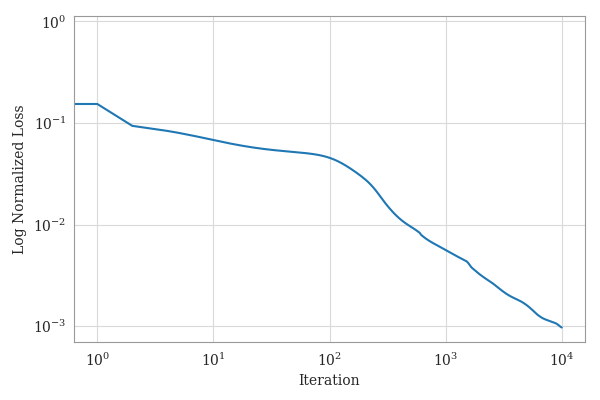}
    \caption{Log of normalized loss versus number of iterations for the synthetic experiment. The final normalized loss is $9.7e-4$. }
    \label{fig:loss}
\end{figure}

\begin{figure}[ht]
    \centering
    \includegraphics[scale=.58]{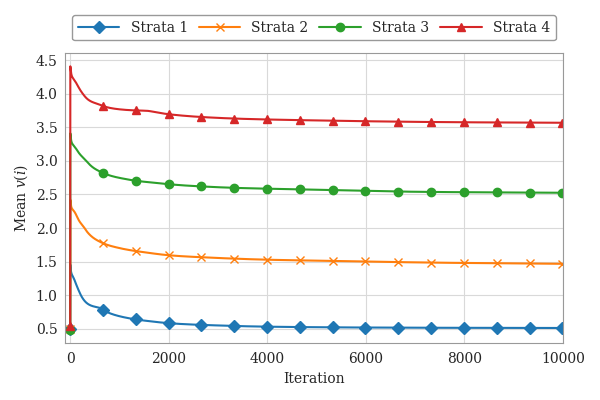}
    \caption{Means of each strata feature $v(i)$ over the four strata of the synthetic experiment. The final means are $0.51,\, 1.47,\, 2.53,\, 3.57$, respectively. We expect to approximately recover the true means of $0.5,\, 1.5,\, 2.5,\, 3.5$. 
    }
    \label{fig:strata_mean}
\end{figure}

\subsection{California Housing}
We use Stratified-NMF to analyze the California housing dataset originally compiled in 1997 available on scikit-learn \cite{pace1997sparse, scikit-learn}. This dataset was curated from the 1990 US census data and contains average income, housing average age, average rooms, average bedrooms, population, average occupation, latitude, and longitude fields. We stratify the data by splitting it into low, medium and high income groups and drop the latitude and longitude fields due to the positivity constraint. The stratified dataset has $815$ samples for the low income stratum all of which make less than $15k$ dollar household income per year, $1492$ samples for the medium income stratum all of which make between $45k-50k$ household income per year, and $308$ samples for the high income stratum which make more than $100k$ per year. Although the sizes of the stratum are imbalanced, Stratified-NMF is able to capture meaningful local features. We run Stratified-NMF on the three strata for $100$ iterations with a rank of $5$. We then analyze the interpretability of the learned strata features, $v(i)$, in Figure~\ref{fig:cali_means}.

Figure~\ref{fig:cali_means} shows the normalized strata features $v(i)$ for the California dataset. This plot was obtained by taking the resulting $v(i)$'s and normalizing them so that $\sum_j v(i)_j = 1$ for all $j$. This depicts the relative differences between strata on the same plot. The median income variable shows an obvious trend that low, medium, and high income is preserved. Interestingly, house age and average bedrooms are relatively constant across strata, while the average rooms seems to increase with wealth. Lastly, the population is highest for the medium income stratum. An analysis by the Brookings Institute reported a similar trend found in survey data from 2017 \cite{berube2018does}. Note that average occupancy was removed from the plot because it was near zero across all strata and the normalization causes one stratum to visually dominate this variable.

\begin{figure}[ht]
    \centering
    \includegraphics[scale=.58]{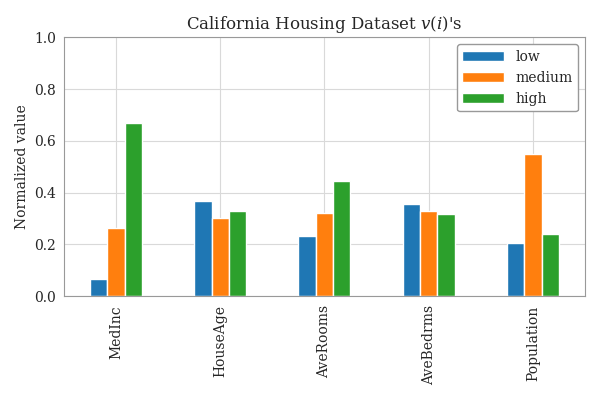}
    \caption{Normalized strata features $v(i)$ for the California dataset. This plot was obtained by taking the resulting $v(i)$'s and normalizing them so that $\sum_j v(i)_j = 1$ for all $j$. The median income and average rooms increase with income, as expected. Interestingly, median income individuals tend to live in more populous regions \cite{berube2018does}. Trends for the other variables are neutral.
    }
    \label{fig:cali_means}
\end{figure}

\subsection{MNIST}
We also test our method on MNIST, a dataset of handwritten digits sourced from the Torchvision library \cite{lecun2010mnist, marcel2010torchvision}. We stratify the data into two strata $S_1, S_2$. The $S_1$ stratum consists of $100$ images of $1$'s and $100$ images of $2$'s, while the $S_2$ stratum consists of $100$ images of $2$'s and $100$ images of $3$'s. Note that the samples in $S_1$ and $S_2$ are disjoint. We expect $H$ to capture the $2$'s as global features since they are in both strata while we expect the $v$'s to capture the $1$'s and $3$'s which are unique to their respective strata. Stratified-NMF is run on the dataset for $100$ iterations with a rank of $5$. The learned strata features, $v(i)$, are displayed in Figure~\ref{fig:mnist_v} and the global features, $H$, are displayed in Figure~\ref{fig:mnist_h}.

In Figure~\ref{fig:mnist_v} we observe that $v(1)$ captures the $1$, which is unique to $S_1$, and $v(2)$ captures the $3$, which is unique to $S_2$. Many of the shared features shown in Figure~\ref{fig:mnist_h} resemble $2$'s, which is the common data to both stratum. This suggests that the strata features learn information specific to each stratum while the global dictionary $H$ learns features common to the whole dataset. The remaining feature in $H$ resembles a $3$. We suspect that this is due to the $2$ and $3$ not having enough commonality, which means that $H$ must store features corresponding to $3$'s in order to fit the data. Note also that rows of $H$ are features of the residuals $A(i)-\mathbf{1}v(i)^T$. This can be seen in Figure~\ref{fig:mnist_h} as many of the features resemble numbers with missing lines or faint regions. The missing parts correspond to the parts extracted from $\mathbf{1}v(i)^T$.

\begin{figure}[ht]
    \centering
        \includegraphics[width=0.32\linewidth]{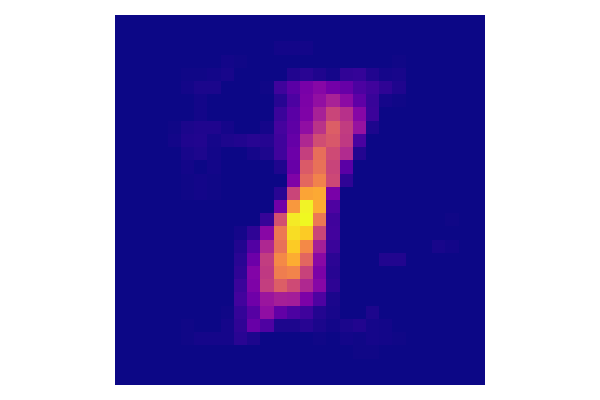}
        \includegraphics[width=0.32\linewidth]{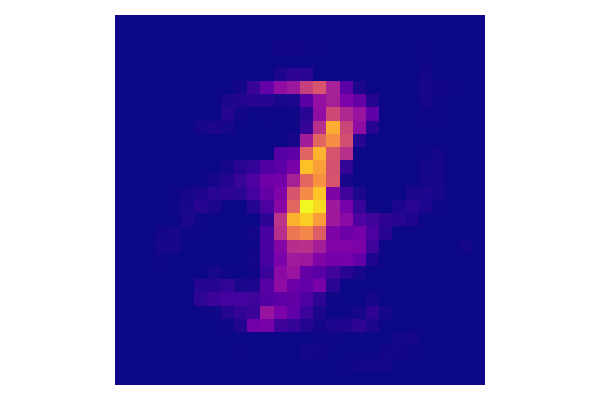}
      \caption{Learned strata features on the MNIST experiment. The left image is a plot of $v(1)$ and it resembles a one. The right image is a plot of $v(2)$ and it resembles parts of a three.}
    \label{fig:mnist_v}
\end{figure}

\begin{figure}[ht]
        \centering
        \includegraphics[width=0.32\linewidth]{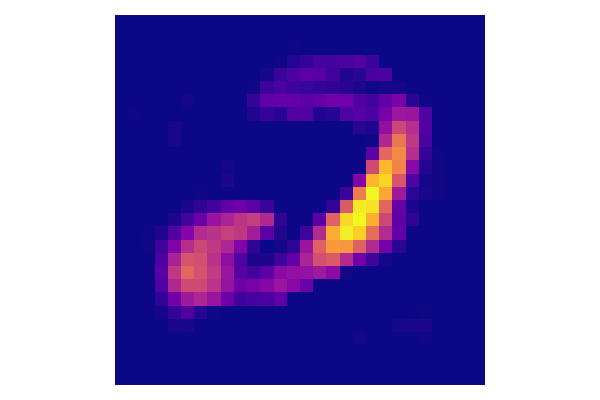}
        \includegraphics[width=0.32\linewidth]{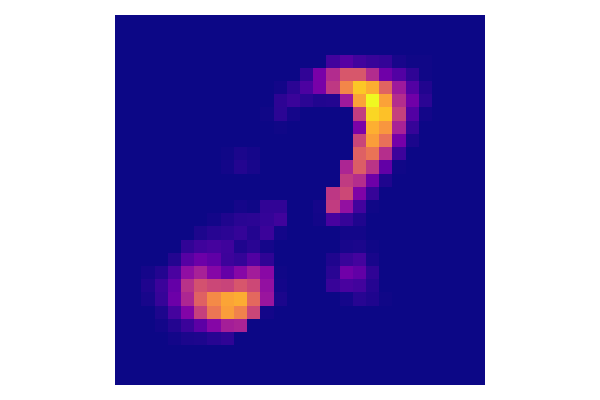}
        \includegraphics[width=0.32\linewidth]{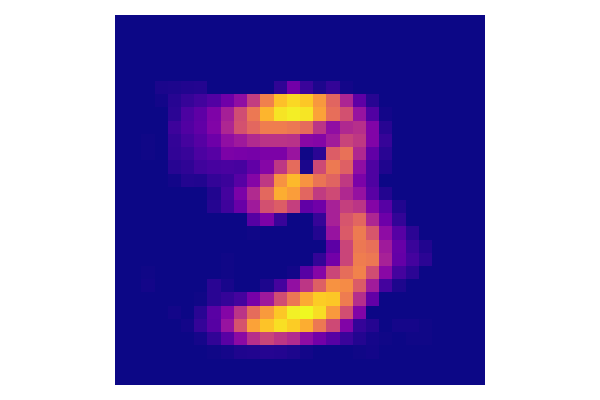}
        \includegraphics[width=0.32\linewidth]{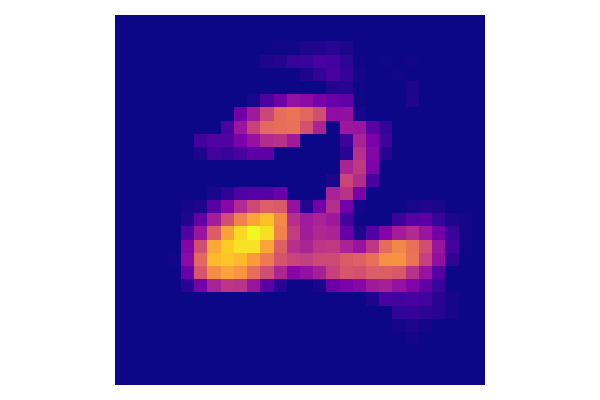}
        \includegraphics[width=0.32\linewidth]{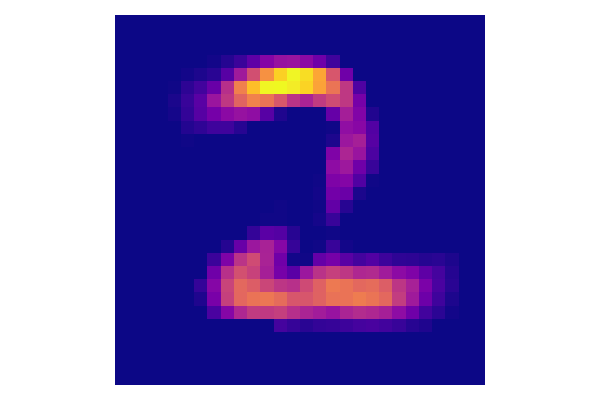}
    \caption{Learned topics matrix for the MNIST experiment. Each image is a row of $H$. These images capture global features across the residuals $A(i)-\mathbf{1}v(i)^T$. Four of the five learned global features are $2$'s with missing segments which appear to be captured by the $v(i)$'s. The remaining feature in $H$ resembles a $3$.}
    \label{fig:mnist_h}
\end{figure}

\subsection{20 newsgroups dataset}
Finally, we test our method on the 20 newsgroups dataset sourced from scikit-learn \cite{scikit-learn, sklearn20newsgroups}. This dataset is composed of 20 categories of news articles encompassing a variety of topics including hockey, cryptography, space, and medicine. Each category contains between 799 and 999 articles from each news group. We preprocess the data by removing all stop words, headers, footers and quotes and subsequently compute a global text frequency inverse document frequency matrix. This yields a length $51840$ sparse non-negative feature vector for each article. We stratify the data according to the news group and run Stratified-NMF for $100$ iterations with a rank of $20$. 

Figure~\ref{tab:newsgroup_words} displays the top three, highest weighted, words captured by the strata features for each category. We observe that for each of the twenty stratum, the top words captured by the strata features correspond exactly to their topic. This highlights the interpretability of the local features. For example, the top three words in the autos newsgroup are ``driving'', ``v6'', and ``transmission'', all of which may be used when discussing automobiles. Additionally, the top words in the hockey newsgroup are ``Toronto'', ``sabres'', and ``coach'', all of which are associated with professional hockey teams. Finally, the top words in the medicine newsgroup are ``kidney'', ``symptoms'', and ``doctors'', all of which are associated with medicine. Upon looking at the top thirty words (contained in the Github), we were manually able to discern which category the words corresponded to. Finally, we observed that the local strata features were sparse with approximately $20\%$ nonzero, highlighting sparsity preservation properties of Stratified-NMF.

\begin{figure}[ht]
    \centering
    \begin{tabular}{|l|lll|}
    \hline
    Newsgroup & 1 & 2 & 3 \\
    \hline
    atheism & cheers & exist & bobbe \\
    graphics & convert & points & vesa \\
    misc computer & norton & tried & truetype \\
    pc hardware & work & set & com \\
    mac hardware & clock & upgrade & hardware \\
    windows x & hp & colormap & r5 \\
    forsale & following & looking & manual \\
    autos & driving & v6 & transmission \\
    motorcycles & tony & cop & wheel \\
    baseball & hitting & jays & phillies \\
    hockey & toronto & sabres & coach \\
    cryptography & random & public & voice \\
    electronics & detector & supply & chip \\
    medicine & kidney & symptoms & doctors \\
    space & new & long & astronomy \\
    christian & churches & read & body \\
    politics guns & jim & shot & clinton \\
    politics mideast & west & civilians & countries \\
    politics misc & national & laws & house \\
    religion misc & life & cult & context \\
    \hline
\end{tabular}
    \caption{Top three words learned for each newsgroup in the 20 newsgroups dataset. These are obtained by looking at the largest values in each $v(i)$. We observe that the forsale newsgroup has the words ``following'', ``looking'' and ``manual'', which are associated with selling or shipping items. Similarly, the baseball newsgroup has the words ``hitting'', ``jays'', and ``phillies'', which correspond to baseball or major league baseball teams.
    }
     \label{tab:newsgroup_words}
\end{figure}

\section{Discussion}
NMF is a popular unsupervised learning method with many extensions and variations \cite{gillis2014and}. A very natural question is if one can apply these stratification techniques to existing NMF formulations. We would like to explore extending our method to other NMF variants such as sparsity constrained NMF \cite{kim2008sparse,kassab2023detecting}, semi-supervised NMF \cite{wang2015semi, haddock2020semi, vendrow2021guided}, online NMF \cite{guan2012online}, and tensor NMF \cite{kassab2023detecting, Zafeiriou2009}. Augmenting these methods to include stratification could provide more insight into subgroups within the data. 
There is also additional work to be done on the theoretical side. Work by Lin further analyzes the convergence properties of NMF and proves convergence under slight modifications to the original NMF multiplicative update \cite{lin2007convergence}. Similar results should hold for Stratified-NMF, but this is outside of the scope of this paper. We believe that the following present promising and important future work:
\begin{itemize}
    \item Variable number of learnable parameters for each stratum
    \item More extensive testing on real world datasets
    \item Initialization of Stratified-NMF Matrices \cite{wild2004improving} 
    \item Automatic hyper-parameter tuning \cite{liu2021gptune}
\end{itemize}
Stratification of unsupervised algorithms may yield more interpretable results on smaller subgroups of data, while preserving global trends in the data.

\section{Conclusion}
In this paper we propose Stratified-NMF to extend NMF to capture heterogeneous data. We first propose the Stratified-NMF objective function and derive multiplicative update rules. 
Then, we prove that the Stratified-NMF objective is non-increasing with respect to the derived, multiplicative updates. Finally, we explore the application of our method to unsupervised learning tasks spanning different modalities including synthetic data, census data, image data, and natural language text data. Stratified-NMF is able to simultaneously capture local and global strata information in the dataset. We highlight the interpretability of the local and the global features learned by the method.

\section{Acknowledgements}
JC, DN and YY were supported by NSF DMS 2108479.
YY was also supported by the  University of California, Los Angeles dissertation year fellowship.

\FloatBarrier
\bibliographystyle{IEEEtran}
\bibliography{IEEEabrv,ref}

\end{document}